\documentclass[11pt]{article}

\usepackage[final]{acl}

\usepackage{times}
\usepackage{latexsym}
\usepackage{tabularx}
\usepackage{array}
\usepackage{graphicx}
\usepackage{amssymb}
\usepackage{amsmath}
\usepackage{caption}
\usepackage{standalone}
\usepackage{comment}
\newcommand{\model}{\textsc{SWYB}\xspace}
\newcommand{\mthdname}{\textsc{SWYB}\xspace}
\usepackage{placeins}

\usepackage{amsmath}
\usepackage{algpseudocode}
\usepackage{bm} 
\usepackage{amsthm}

\usepackage[ruled,vlined,linesnumbered]{algorithm2e}
\DontPrintSemicolon

\usepackage[T1]{fontenc}

\usepackage[utf8]{inputenc}

\usepackage{microtype}
\usepackage{booktabs}
\usepackage{tikz}
\usepackage{listofitems}
\usepackage{etoolbox}

\usepackage{adjustbox}
\usepackage{pgfplots, pgfplotstable}
\pgfplotsset{compat=1.18}
\usetikzlibrary{arrows.meta, positioning, automata, calc}

\definecolor{layer1}{RGB}{173,216,230}
\definecolor{layer2}{RGB}{144,238,144}
\definecolor{layer3}{RGB}{255,182,193}
\colorlet{layer1border}{layer1!70!black}
\colorlet{layer2border}{layer2!70!black}
\colorlet{layer3border}{layer3!70!black}
\definecolor{barA}{RGB}{173,216,230}
\definecolor{barB}{RGB}{144,238,144}
\definecolor{barC}{RGB}{255,182,193}
\definecolor{barD}{HTML}{ff362e}
\definecolor{barE}{HTML}{7982db}

\tikzset{
  neuron/.style={circle, thick, minimum size=0.7cm, inner sep=1pt},
  connect/.style={->, >=stealth, thick, draw=black!70},
}

\tikzset{database/.style={cylinder,aspect=0.5,draw,rotate=90,path picture={
\draw (path picture bounding box.160) to[out=180,in=180] (path picture bounding
box.20);
\draw (path picture bounding box.200) to[out=180,in=180] (path picture bounding
box.340);
}}}

\usepackage{inconsolata}

\usepackage{graphicx}

\newtheorem{theorem}{Theorem}[section]

\title{Stay Within Your Bounds: Distance-Guided Decoding for Guaranteed Context-Free Grammar Compliance}

\author{\textbf{Vincenzo Collura}\textsuperscript{1,*}, \textbf{Karim Tit}\textsuperscript{1,*}, \textbf{Eleonora Giunchiglia}\textsuperscript{2}, \\ \textbf{Mike Papadakis}\textsuperscript{1}, \textbf{Maxime Cordy}\textsuperscript{1} \\ \textsuperscript{1} University of Luxembourg, Luxembourg \\ \textsuperscript{2} Imperial College London, United Kingdom \\ \textsuperscript{*} Equal contribution. \\ \small \textbf{Correspondence:} \href{mailto:vincenzo.collura@uni.lu}{vincenzo.collura@uni.lu}
}

\begin{document}
\maketitle
\begingroup
\renewcommand{\thefootnote}{}
\footnotemark\footnotetext{The code to run all our experiments is available on GitHub: 
\href{https://github.com/Enzossssss/SWYB}{https://github.com/Enzossssss/SWYB}.}
\endgroup

\begin{abstract}
Grammar-constrained decoding helps large language models produce syntactically valid structured outputs, such as code, JSON, and SQL. For context-free grammars, many practical decoders enforce local prefix feasibility: each token must keep the current prefix extendable to some valid completion. Yet, under tokenizer–grammar mismatch and finite token budgets, feasible prefixes may still fail to reach acceptance. We propose a lookahead-guided decoding framework for context-free grammars based on pushdown automata. Offline, we compute bounded pushdown summaries with reachability labels and upper-bound distances to acceptance. Online, these estimates guide horizon-aware pruning and beam search. The resulting decoder is syntactically sound: every output is accepted by the target grammar. Experiments on JSON, SQL, and Linear Temporal Logic (LTL) show both consistent syntactic validity and improved completion quality over existing baselines.
\end{abstract}
\begin{figure*}
    \centering
    \adjustbox{max width=\textwidth,keepaspectratio}{%
    \begin{tikzpicture}[x=1.5cm,y=0.9cm]
      \readlist\Nnod{3,3,2}
      \def\yshift{0.45}
      
      \begin{scope}[x=0.6*1.5cm, y=0.9cm]
        \foreachitem \N \in \Nnod{
          \def\lay{\Ncnt}
          \pgfmathsetmacro\prev{int(\Ncnt-1)}
          \ifnum\lay=1\relax
            \def\layercolor{layer1}
            \def\bordercolor{layer1border}
          \else\ifnum\lay=2\relax
            \def\layercolor{layer2}
            \def\bordercolor{layer2border}
          \else
            \def\layercolor{layer3}
            \def\bordercolor{layer3border}
          \fi\fi
          \foreach \i [evaluate={\c=int(\i==\N); \y=\N/2-\i-\c*\yshift; \x=\lay;}] in {1,...,\N}{
            \node[neuron, fill=\layercolor, draw=\bordercolor] (N\lay-\i) at (\x,\y) {};
            \ifnumcomp{\lay}{>}{1}{
              \foreach \j in {1,...,\Nnod[\prev]}{
                \draw[white,line width=1,shorten >=1] (N\prev-\j) -- (N\lay-\i);
                \draw[connect] (N\prev-\j) -- (N\lay-\i);
              }
            }{}
          }
          \path (N\lay-\N) --++ (0,0.9+\yshift) node[midway,scale=1.3] {$\vdots$};
        }
      \end{scope}
      
      \begin{scope}[yshift=-4.5cm, xshift=4.5cm,
        every state/.style={draw=red!80!black, thick, minimum size=0.7cm, inner sep=2pt, circle, font=\small},
        every initial by arrow/.style={font=\small, thick, -stealth},
        initial,
        auto,
        node distance=2.0cm,
        on grid,
      ]

        \node (q0) [state, initial, initial text=, fill=layer2, draw=layer2border] 
          [minimum size=0.7cm] {$q_0$};
        \begin{scope}[every node/.style={circle, draw=layer2border, thick, fill=none}]
          \node at (q0) [minimum size=0.5cm, draw=layer2border, fill=none] {};
        \end{scope}
        
        \node (q1) [state, minimum size=0.7cm, fill=layer1, draw=layer1border, above right = of q0] {$q_1$};
        \node (q2) [state, minimum size=0.7cm, fill=layer1, draw=layer1border, below right = of q1] {$q_2$};
      
        \path [-stealth, thick, text=black, connect]
          (q0) edge [bend left] node [left=0.02cm] {$a,\varepsilon \rightarrow a$} (q1)
          (q1) edge [bend left] node[above right, inner sep=0pt] {$b,\varepsilon \rightarrow b$} (q2)
          (q2) edge [bend left] node[below] {$c,a \rightarrow \varepsilon$}  (q0)
          (q2) edge [bend left] node[below left, inner sep=-1pt] {$d,b \rightarrow \varepsilon$} (q1);
        
        \begin{scope}[shift={(q2.east)}, xshift=0.1cm, yshift=-0.8cm]
          \draw[thick, fill=layer1, draw=layer1border] (0,0) rectangle (0.6,1.0);
          \draw[thick, draw=layer1border] (0.1,0.25) -- (0.5,0.25);
          \draw[thick, draw=layer1border] (0.1,0.5) -- (0.5,0.5);
          \draw[thick, draw=layer1border] (0.1,0.75) -- (0.5,0.75);
          \node at (0.3, 0.875) {\small$a$};
          \node at (0.3, 0.625) {\small$b$};
          \node at (0.3, 0.375) {\small$a$};
          \node at (0.3, 0.125) {\small$z_0$};
        \end{scope}
        
      \end{scope}
      
      \begin{scope}[every node/.style={align=center, font=\small, draw=white},
                    every path/.style={->, >=stealth, thick, draw=black!80}]
        \node (input) at (-2,-2.5) [draw, rounded corners, fill=white, 
              minimum width=2.6cm, minimum height=1.8cm, inner sep=2pt] 
              {Sequence,\\Score,\\\{$(q, \gamma)$\},\\Remaining\\Token\\Budget};
        
        \coordinate (branch) at (-0.8,-2.5);
        \coordinate (premiddlenn) at (-0.8,-0.9);
        \coordinate (middlenn) at (0,-0.9);
        \coordinate (premiddleautomaton) at (-0.8,-4.5);
        \coordinate (middleautomaton) at (2.2,-4.5);
        \draw[-] (input.east) -- (branch);
        \draw[-] (branch) -- (premiddlenn);
        \draw[->] (premiddlenn) -- (middlenn);
        \draw[-] (branch) -- (premiddleautomaton);
        \draw[->] (premiddleautomaton) -- (middleautomaton);
      \end{scope}
      
       \begin{scope}[every node/.style={font=\small, text=black}]
        \draw[connect] (2.2, -0.9) -- (3.7, -0.9) -- (3.7, -2.85)
          node[pos=-0.0, above left, align=center] {Top-K tokens};
      \end{scope}
      
        \begin{scope}[xshift=8.5cm, yshift=-1.6cm]
            \begin{axis}[
                ybar=-0.4cm,
                bar width=0.3cm,
                width=5cm,
                height=4cm,
                symbolic x coords={a, b, c, d, e},
                xtick=\empty,
                ytick=\empty,
                xlabel={},
                ylabel={},
                ymin=0,
                ymax=2.5,
                tick style={draw=none},
                enlarge x limits=0.3,
                axis lines=none,
            ]
                \addplot[barA,fill, draw=barA!70!black] coordinates {(a,0.3)};
                \addplot[barB,fill, draw=barB!70!black] coordinates {(b,0.9)};
                \addplot[barC,fill, draw=barC!70!black] coordinates {(c,1.5)};
                \addplot[fill=barD!50!white, draw=barD!70!black] coordinates {(d,1.6)};
                \addplot[fill=barE!50!white, draw=barE!70!black] coordinates {(e,2.2)};
            \end{axis}
        \end{scope}
      
      \begin{scope}[shift={(13cm, -1.8cm)}]
        \tikzset{beamnode/.style={rectangle, thick, minimum width=0.8cm, minimum height=0.5cm, align=center, inner sep=2pt}}
        
        \node[beamnode, fill=layer1, draw=layer1border] (init1) at (0,0.2) {};
        \node[beamnode, fill=layer1, draw=layer1border] (c11) at (1.2,1.0) {};
        \node[beamnode, fill=layer1, draw=layer1border] (c12) at (1.2,0.2) {};
        
        \node[beamnode, fill=layer3, draw=layer3border] (init2) at (0,-0.6) {};
        \node[beamnode, fill=layer3, draw=layer3border] (c21) at (1.2,-0.6) {};
        \node[beamnode, fill=layer3, draw=layer3border] (c22) at (1.2,-1.4) {};
        
        \draw[->, thick][connect] (init1) -- (c11);
        \draw[->, thick][connect] (init1) -- (c12);
        \draw[->, thick][connect] (init2) -- (c21);
        \draw[->, thick][connect] (init2) -- (c22);
      \end{scope}
      
    \draw[->, thick, draw=black!70][connect] 
    (7.65,-0.9) -- (8.0,-0.9) 
    node[above right, midway, align=center, font=\small, yshift=0.15cm] {Reranked Tokens\\(deduplicated)} 
    -- (8.0,-2.2) -- (8.2,-2.2);
          
      \begin{scope}[every node/.style={font=\small, text=black}]
        \draw[connect] (4.1, -2.85) -- (4.1, -0.9) -- (6.0, -0.9) 
          node[pos=-0.05, above right, align=center] {Valid Top-K,\\PDA Proposed\\Tokens,\\\{$(q', \gamma')$\}};
      \end{scope}

      \draw[->, thick, draw=black!70][connect] (10.2, -2.2) -- (10.5, -2.2) node[right, align=center, font=\small, name=beamout] {Updated:\\Sequences,\\Scores,\\\{$(q', \gamma')$\},\\Remaining\\Token\\Budget -= 1};
      
      \begin{scope}[shift={(11.5cm, -4.95cm)}, scale=0.5]

          \def\W{2.3}
          \def\H{4.5}
        
          \def\leftMargin{0.3}
          \def\rightMargin{0.3}
          \def\startY{0.4}
          \def\bottomMargin{0.4}
          \def\lineSpacing{0.6}
        
          \pgfmathsetmacro{\endY}{\H - \bottomMargin}
        
          \fill[gray!40] (0.1,-0.1) rectangle (\W+0.1, \H-0.1);
          \fill[white] (0,0) rectangle (\W,\H);
          \draw[gray!60, line width=0.6pt] (0,0) rectangle (\W,\H);
        
          \pgfmathsetmacro{\numLines}{(\endY - \startY)/\lineSpacing}
          \pgfmathtruncatemacro{\numLinesInt}{\numLines}
        
          \foreach \i in {0,...,\numLinesInt} {
            \pgfmathsetmacro{\y}{\startY + \i * \lineSpacing}
            \draw[layer1, line width=0.5pt] (\leftMargin, \y) -- (\W-\rightMargin, \y);
          }
    \end{scope}

    \draw[connect] 
        (8.8cm, -3.5cm) -- (11.4cm, -3.5cm)
        node[midway, above, font=\small] {$(q', \gamma')$};
    
    \draw[connect] 
        (11.4cm, -4.5cm) -- (8.8cm, -4.5cm)
        node[
            midway,
            below,
            font=\small,
            align=center
        ]
        {Distance\\
         $\lor$\\
         Activate On-Fly\\Computation};
      
    \node[font=\normalsize\bfseries, text=black] at (1.2, 1.3) {LLM};
    \node[font=\normalsize\bfseries, text=black] at (4.1, -6.3) {Push-Down Automaton};
    \node[font=\normalsize\bfseries, text=black, align=center] at (8.3, -6.2) {Configuration\\Database};
    \node[font=\normalsize\bfseries, text=black] at (6.8, 1.3) {Reranking};
    \node[font=\normalsize\bfseries, text=black, align=center] at (9.9, -4.5) {Beam\\Search};
      
    \end{tikzpicture}
    }
    \caption{
Overview of \model: Model top-$K$ tokens are validated by tokenizer-aware PDA execution, filtered using reachability and distance-to-acceptance estimates, and combined with PDA-proposed candidates. Surviving tokens are reranked and used to update the beam state.
}
    \label{fig:neurons_colored}
\end{figure*}
\section{Introduction}
Large language models (LLMs) are increasingly used to generate structured objects such as programs, queries, and data records. In these settings, syntactic well-formedness is not merely a formatting preference: invalid outputs may be rejected by downstream parsers, violate interface contracts, or fail to
execute. This has motivated a growing line of work on constrained and guided decoding, including grammar-constrained generation, type-constrained code generation, and structured generation engines for LLMs~\citep{
willard2023efficientguidedgenerationlarge,
beurerkellner2024guidingllmsrightway,
ugare2024syncodellmgenerationgrammar,
dong2025xgrammarflexibleefficientstructured,
park2025flexibleefficientgrammarconstraineddecoding}.

For context-free grammar constraints, many practical decoders enforce validity through \emph{local prefix feasibility}. At each step, they compute which tokens keep the current prefix extendable to at least one valid completion and mask the
remaining tokens. This mechanism is effective for preventing immediate syntax violations and has been implemented in several recent systems for structured generation~\citep{
willard2023efficientguidedgenerationlarge,
beurerkellner2024guidingllmsrightway,
ugare2024syncodellmgenerationgrammar,
dong2025xgrammarflexibleefficientstructured,
park2025flexibleefficientgrammarconstraineddecoding,
Chen_2025}. However, prefix feasibility is weaker than completed-output validity under a finite token budget. A prefix may remain locally admissible while moving toward configurations that require too many remaining tokens, additional structural commitments, or low-probability continuations. Thus, local masking preserves prefix validity, but it does not quantify progress toward acceptance.

A complementary line of work studies global control of autoregressive generation using automata, dynamic programming, probabilistic steering, or sampling-based inference~\citep{
zhang2023tractablecontrolautoregressivelanguage,
zhang2024adaptablelogicalcontrollarge,
semctrl,
loula2025syntacticsemanticcontrollarge,
lipkin2025fastcontrolledgenerationlanguage}. These methods show that constrained generation can benefit from reasoning about future completions. For context-free grammar constraints, however, such lookahead is difficult: the natural
operational model is a pushdown automaton (PDA), whose configurations include an unbounded stack. This yields an infinite configuration space, making direct finite-state value computation lookahead inapplicable.

In this paper, we introduce \mthdname{}, a lookahead-guided decoding framework for LLM generation under context-free grammar constraints. \mthdname{} augments local grammar masking with distance-to-acceptance estimates computed over a pushdown representation. Offline, we construct bounded stack summaries using ideas from the reachability analysis of pushdown systems~\citep{pushdown_systems} and weighted pushdown systems~\citep{weighted_pushdown_systems}. These summaries associate abstract PDA configurations with reachability labels
and token-level upper-bound distances to acceptance. Online, the decoder uses these estimates for pruning and re-ranking in the beam search.

A practical difficulty is that LLM vocabularies are defined over subword tokens, whereas grammars are usually specified over characters, terminals, or lexical units. \mthdname{} therefore includes a tokenizer-aware consumption procedure for matching model tokens against PDA transitions. At each decoding step, it
combines model top-$k$ tokens with automaton-proposed backup tokens, validates the resulting candidates through budgeted successor exploration, and uses distance information to favor continuations that make structural progress. The resulting decoder is syntactically sound: every completed output returned by the algorithm is accepted by the target PDA.

In summary, our contributions are:
\begin{itemize}
    \item A sound pushdown lookahead mechanism based on bounded reachability and token-distances to acceptance.
    \item A tokenizer-aware decoding algorithm combining model top-$k$, automaton proposals, and budgeted PDA exploration.
    \item Experiments on JSON, SQL and LTL grammars showing consistent syntactic correctness and improved completion quality.
\end{itemize}

\section{Related Work}
\label{sec:related-work}

Constrained decoding methods differ along three axes: the constraint class they support, whether they enforce prefix feasibility, and whether they use lookahead or remaining-budget information. Table~\ref{tab:related-work} summarizes this landscape. \mthdname{} targets a missing combination in
practical systems: CFG/PDA constraints with prefix safety and budget-aware acceptance guidance.

\paragraph{Grammar masking.}
Grammar-constrained decoding restricts the output tokens of the language model so that the completion belongs to a given formal language. Practical systems such as Guidance and Outlines-style guided generation support structured generation through regular expressions, templates, and grammar-like specifications~\citep{lundberg2024guidance,willard2023efficientguidedgenerationlarge}. Recent work focuses on efficient CFG enforcement under subword tokenization: DOMINO addresses tokenizer-grammar alignment~\citep{beurerkellner2024guidingllmsrightway}, SynCode builds grammar-aware lookup structures~\citep{ugare2024syncodellmgenerationgrammar}, XGrammar optimizes grammar execution and token-mask construction~\citep{dong2025xgrammarflexibleefficientstructured}, GreatGramma studies tokenizer-grammar preprocessing~\citep{park2025flexibleefficientgrammarconstraineddecoding},
and Pre$^3$ exploits deterministic pushdown automata for LR(1)
grammars, which constitute a strict subset of CFGs~\citep{Chen_2025}.

These methods primarily enforce \emph{local prefix feasibility}: at each step, the decoder checks whether a token keeps the current prefix extendable to at least one valid completion. This provides prefix safety, but not necessarily completed-output validity under a finite token budget. If decoding stops because the maximum number of tokens is reached before an accepting configuration, the returned output may still be syntactically invalid, even though every generated prefix was locally feasible~\citep{ugare2024syncodellmgenerationgrammar}. Thus, local masking answers whether a prefix can still be completed, but not whether it is close enough to acceptance within the remaining budget.

\paragraph{Global control.}
A complementary line of work studies controlled generation through global objectives, automata-based dynamic programming, or probabilistic inference. GeLaTo formulates tractable control for autoregressive generation with automata and dynamic programming~\citep{zhang2023tractablecontrolautoregressivelanguage}. Ctrl-G extends this view to adaptable logical control~\citep{zhang2024adaptablelogicalcontrollarge}. SEM-CTRL steers generation toward semantic constraints~\citep{semctrl}. ABS enforces constraints that can be represented as a deterministic finite automaton (DFA) using the distance from the accepting states~\citep{collura2025absenforcingconstraintsatisfaction}.  Following a similar approach, TruncProof addresses finite-budget constrained decoding using LL(1) parsing and remaining-token-cost estimates~\citep{kato2026truncproof}.
Importantly, LL(1) grammars form a \textit{strict} subset of deterministic CFGs (DCFGs), which are themselves
a strict subset of general CFGs. In contrast, our method, \mthdname{}, operates on generic
PDAs and therefore supports general CFG constraints.
 Sequential Monte Carlo and adaptive weighted rejection sampling provide sampling-based control mechanisms for syntactic, semantic, or black-box constraints~\citep{loula2025syntacticsemanticcontrollarge,lipkin2025fastcontrolledgenerationlanguage}. These methods show the value of lookahead, but global control is much harder for CFGs than for regular constraints because pushdown automata have unbounded stacks. 

\paragraph{Reachability analysis.}
The formal-methods literature provides symbolic tools for infinite-state
pushdown systems. Reachability for pushdown automata can be represented using
finite stack summaries and saturation procedures~\citep{pushdown_systems}.
Weighted pushdown systems extend this idea by attaching weights to transitions
and computing path quantities, with applications to interprocedural dataflow
analysis~\citep{weighted_pushdown_systems}. \mthdname{} adapts this perspective
to constrained decoding by computing bounded pushdown summaries and upper-bound
distances to acceptance, then using them to guide token pruning and re-ranking.
\paragraph{Program constraints.}
Several methods specialize in constrained decoding for code or programming-language structure. Type-constrained decoding restricts generation using type information~\citep{M_ndler_2025}, while correctness-guaranteed code generation studies decoding under stronger program-level specifications~\citep{li2025correctnessguaranteedcodegenerationconstrained}. CFG-constrained decoding has also been extended to diffusion language models~\citep{mündler2025constraineddecodingdiffusionllms}. These approaches are complementary to ours: they enrich the constraint language or model class, whereas \model adds \textit{sound} distance-guided lookahead to CFG/PDA decoding.
\begin{table}[t]
\centering
\small
\setlength{\tabcolsep}{2.5pt}
\renewcommand{\arraystretch}{1.05}
\begin{tabularx}{\columnwidth}{@{}>{\raggedright\arraybackslash}Xccc@{}}
\toprule
Method & Constraint & Prefix & Budget \\
\midrule
Guidance~\small\citep{lundberg2024guidance}
& CFG & partial & no \\
Outlines~\small\citep{willard2023efficientguidedgenerationlarge}
& CFG & yes & no \\
DOMINO~\small\citep{beurerkellner2024guidingllmsrightway}
& CFG & yes & no \\
SynCode~\small\citep{ugare2024syncodellmgenerationgrammar}
& CFG & yes & no \\
XGrammar~\small\citep{dong2025xgrammarflexibleefficientstructured}
& CFG & yes & no \\
GreatGramma~\small\citep{park2025flexibleefficientgrammarconstraineddecoding}
& CFG & yes & no \\
Pre$^3$~\small\citep{Chen_2025}
& DCFG & yes & no \\
GeLaTo~\small\citep{zhang2023tractablecontrolautoregressivelanguage}
& REG & yes & yes \\
Ctrl-G~\small\citep{zhang2024adaptablelogicalcontrollarge}
& REG & yes & yes \\
ABS~\small\citep{collura2025absenforcingconstraintsatisfaction}
& REG & yes & yes \\
GenLM.control~\small\citep{loula2025syntacticsemanticcontrollarge,lipkin2025fastcontrolledgenerationlanguage}
& CFG/BB & yes & yes \\
SEM-CTRL~\small\citep{semctrl}
& sem. & no & partial \\
Type-constr.~\small\citep{M_ndler_2025}
& program & yes & no \\
TruncProof~\small\citep{kato2026truncproof}
& LL(1) & yes & yes \\
\midrule
\model & CFG & yes & yes \\
\bottomrule
\end{tabularx}
\caption{
Positioning of {\mthdname{}} relative to constrained and controlled decoding methods. "Prefix" indicates whether generation is kept within the prefix closure of the constraint language. "Budget" indicates whether decoding uses a remaining-horizon, distance-to-acceptance, or related lookahead signal. 
}
\label{tab:related-work}
\end{table}

\paragraph{Budget saturation problem.} Local constrained decoding methods enforce constraints based primarily on the current decoding state, but lack an explicit notion of distance to an accepting configuration. Consequently, locally valid actions may lead to increasingly complex intermediate structures without providing guidance on when to stop expanding and start closing them. This myopia can cause performance to saturate as the generation budget increases: providing additional tokens does not necessarily bring the decoder closer to satisfying the global constraints, as can be observed empirically in Appendix~\ref{saturation}.

\section{CFG-Constrained Decoding as a PDA Reachability Problem}
\label{sec:setup}

We consider LLM decoding under a context-free grammar (CFG) constraint. Since
CFG and pushdown automata (PDA) define the same class of languages, any CFG
can be represented by an equivalent PDA, and conversely, any PDA recognizes a
context-free language~\citep{hopcroft2006automata}. We therefore formulate the
constraint through a PDA
\begin{equation}
\label{eq:pda}
\mathcal P = (Q,\Sigma,\Gamma,\Delta,q_0,z_0,F),
\end{equation}
where $Q$ is a finite set of control states, $\Sigma$ is the input alphabet
whose elements correspond to CFG terminals, $\Gamma$ is the stack alphabet,
$\Delta\subseteq Q \times (\Sigma \cup \{\varepsilon\}) \times \Gamma
\times Q \times \Gamma^*$ is the transition relation, $q_0\in Q$ is the initial state,
$z_0\in\Gamma$ is the initial stack symbol, and $F\subseteq Q$ is the set of
accepting states. A PDA configuration is a pair
$(q,\gamma)\in Q\times\Gamma^*$, where $\gamma$ is the stack, written with its
top on the left. We write
\begin{equation}
\label{eq:pda-transition}
(q,\gamma) \xrightarrow{a} (q',\gamma')
\end{equation}
when the PDA can consume $a\in\Sigma\cup\{\varepsilon\}$ and move from
$(q,\gamma)$ to $(q',\gamma')$. The language on $\Sigma$ accepted by $\mathcal{P}$ is denoted as $\mathcal L(\mathcal P)$.

\paragraph{Token-level configurations.}
LLMs generate tokens from a vocabulary $V$, whereas the PDA consumes terminals
from $\Sigma$. A token may represent a full terminal, several terminals, a strict
prefix of a terminal, or a string crossing terminal boundaries. We therefore
track an \emph{extended configuration}
$(q,\gamma,\rho)\in Q\times\Gamma^*\times\Sigma^*$, where $\rho$ is the open
prefix of a terminal currently being matched; $\rho=\varepsilon$ means that no
terminal is open. Let $\mathcal E=Q\times\Gamma^*\times\Sigma^*$. Token
consumption is represented as a transition function
\begin{equation}
\label{eq:consume-function}
\mathrm{Consume}:\mathcal E\times V\to 2^{\mathcal E},
\end{equation}
where $\mathrm{Consume}((q,\gamma,\rho),t)$ contains all extended configurations
reachable by matching the decoded token string against CFG terminals and
applying the induced PDA transitions.

\paragraph{Reachability and token distance.}
For a stack bound $H$, let
$\mathcal S_H\subseteq Q\times\Gamma^{\leq H}$ be a bounded pushdown summary.
Reachability in this summary follows the symbolic view of pushdown systems, in
which predecessor sets can be computed by $\mathrm{Pre}^*$-style saturation
procedures~\citep{pushdown_systems}. We write
$\mathrm{Reach}_H(q,\gamma)$ when $(q,\gamma)$ can reach an accepting
configuration within $\mathcal S_H$.

To associate summarized configurations with a token-level distance estimate, we
turn the PDA into a weighted pushdown system. For each PDA transition consuming
a terminal $a$, we assign the weight $w(a)=$ the minimum number of LLM tokens required to generate $a$.
If a terminal is specified by a regular expression $R$, we first identify a
shortest string $s_R\in L(R)$ and compute the minimum tokenization length of
$s_R$ using the tokenizer vocabulary. Epsilon
transitions are assigned weight zero. Under a stack-height bound $H$, we then
apply the standard weighted $\mathrm{Pre}^*$ algorithm to compute the minimum
accumulated token cost of reaching an accepting configuration. We denote the
resulting quantity by
\begin{equation}
\label{eq:distance-setup}
d_H:\mathcal S_H\to\mathbb N\cup\{\infty\}.
\end{equation}
In the exact weighted bounded system, $d_H(q,\gamma)$ coincides with the true
minimum token cost $d_H^*(q,\gamma)$. More generally, when conservative
approximations are used in the construction of the terminal weights, we require
the estimate to satisfy
\begin{equation}
\label{eq:upper-bound}
d_H(q,\gamma)\geq d_H^*(q,\gamma).
\end{equation}
Thus, if the remaining token budget is $K$ and
$d_H(q,\gamma)>K$, no accepting continuation of length at most $K$ exists
within the bounded summary. The resulting estimate is therefore used for
budget-aware pruning in Equation~\eqref{eq:upper-bound}.

\paragraph{Objective.}
We formulate CFG-constrained decoding as the problem of producing the
highest-probability sequence accepted by the PDA within a finite token budget
$T$:
\begin{equation}
\label{eq:constrained-objective}
y^\star
\in
\arg\max_{\substack{
y\in\mathcal L(\mathcal P)\\
|y|\leq T
}}
\sum_{i=1}^{|y|}
\log p_\theta(y_i\mid y_{<i}).
\end{equation}
Exact optimization of this sequence-level objective is intractable in general:
for autoregressive models, exact maximum-a-posteriori
decoding is NP-hard, even under simple unary constraints~\citep{pachet2026hiddenbiases}.
Accordingly, \mthdname{} treats Eq.~\eqref{eq:constrained-objective} as an
approximate search objective, guided by reachability and distance estimates.

\section{Proposed Method}
\label{sec:method}

\subsection{Overview}
\label{sec:method-overview}
\mthdname{} is a two-phase decoder for LLM generation under CFG constraints
represented as PDAs. Offline, it builds the bounded summary $\mathcal S_H$ with
reachability labels $\mathrm{Reach  }_H$ and token-distance estimates $d_H$.
Online, our beam search, with a width of $M$, maintains extended configurations $(q,\gamma,\rho)$ and uses these labels to validate, prune, and rank token candidates. Algorithm~\ref{alg:method-overview} describes the online decoding loop, Figure~\ref{fig:neurons_colored} summarizes the full pipeline, and a practical example is given in Appendix~\ref{practical_example}. Each beam
stores a prefix $y$, a score $\ell$, and an extended configuration $(q,\gamma,\rho)$.
Given a total token budget $T$, at step $k$, with the remaining horizon $K=T-k$, the decoder builds a candidate set,
executes candidates through the PDA, filters out unreachable or over-budget
successors, and keeps the top $M$ scored extensions. Here $\mathrm{Accept}(q,\gamma,\rho)$ means that the PDA acceptance condition holds and no terminal is open, i.e., $\rho=\varepsilon$.

\begin{algorithm}[t]
\normalsize
\DontPrintSemicolon
\caption{\mthdname{} decoding}
\label{alg:method-overview}

\KwIn{LLM $p_\theta$, PDA $\mathcal P$, summary $\mathcal S_H$, distance $d_H$, max length $T$, beam width $M$}
\KwOut{Best completed output $y$ found}

\BlankLine
$\mathcal B \gets \{(\varepsilon,0,q_0,z_0,\varepsilon)\}$\;

\BlankLine
\For{$k=0$ \KwTo $T-1$}{
    $K \gets T-k$\\
    $\mathcal B' \gets \emptyset$\;

    \BlankLine
    \ForEach{$(y,\ell,q,\gamma,\rho)\in\mathcal B$}{
        \If{$\mathrm{Accept}(q,\gamma,\rho)$}{
            add $(y,\ell,q,\gamma,\rho)$ to $\mathcal B'$ \\
            \textbf{continue}\;
        }

        \BlankLine
        form $T_k^{\mathrm{cand}}$ by Eq.~\eqref{eq:candidate-set}\;

        \BlankLine
        \ForEach{$t\in T_k^{\mathrm{cand}}$}{
            compute $C_t$ by Eq.~\eqref{eq:successor-set}, \;

            \BlankLine
            \ForEach{$(q',\gamma',\rho')\in C_t$}{
                score using Eq.~\eqref{eq:beam-score}\;
                add $(yt,s',q',\gamma',\rho')$ to $\mathcal B'$\;
            }
        }
    }

    \BlankLine
    $\mathcal B \gets$ top-$M$ beams in $\mathcal B'$\;
}

\BlankLine
\Return best accepting beam in $\mathcal B$
\;
\end{algorithm}

\paragraph{Offline and on-the-fly computation.}
 Offline, before decoding begins, the grammar is translated to a PDA and \mthdname{} caches $\mathcal S_H$ and $d_H$ in a configuration database reused across prompts. Online, most queries are answered by lookup, while extended configurations absent from the configuration database are computed on the fly and stored in the configuration database itself, whilst those out of bounds or unreachable are discarded. Thus, repeated use of the same grammar amortizes the cost of the summary, while on-the-fly computation covers configurations not encountered during precomputation.

\subsection{Candidate Tokens}
\label{sec:candidate-tokens}

At each step, \mthdname{} combines model-led and automaton-led proposals. Let
$\mathrm{TopK}_k(y)$ be the model top-$k$ tokens for the current prefix $y$.
The PDA also proposes tokens compatible with locally admissible continuations:
Let $T_{\mathrm{prop}}(q,\gamma,\rho)$ be the set of tokens whose decoded string
is compatible with at least one admissible continuation from the extended
configuration $(q,\gamma,\rho)$:
\begin{equation}
\label{eq:token-proposal}
T_{\mathrm{prop}}(q,\gamma,\rho)
\subseteq V .
\end{equation}
For a given configuration \((q,\gamma,\rho)\), the automaton-proposed tokens are obtained by considering every transition \((q, a, \gamma_{\text{top}} , q', \gamma') \in \Delta \) for which \(\rho\) is a prefix of \(a\), where $\gamma_{\text{top}}$ is the top element of $\gamma$. Let \(a[|\rho|:]\) be the remaining suffix after removing \(\rho\); if non‑empty, its first token (according to the model’s tokenizer) is included in \(T_{\mathrm{prop}}(q,\gamma,\rho)\).

\[
T_{\mathrm{prop}}(q,\gamma,\rho) \;=
\]
\[
\left\{
\operatorname{First}\bigl(\operatorname{tok}(a[|\rho|:])\bigr)
\;\middle|\;
\begin{aligned}
&  (q, a, \gamma_{\text{top}} , q', \gamma')\in\Delta,\\
& \rho \text{ is a prefix of } a,\\
& a[|\rho|:] \neq \varepsilon
\end{aligned}
\right\}
\]
where \(\operatorname{First}(t_1t_2\ldots)=t_1\) and \(\operatorname{tok}(\cdot)\) are the model’s tokenization functions.
The resulting candidate set is
\begin{equation}
\label{eq:candidate-set}
T_k^{\mathrm{cand}}
=
\mathrm{TopK}_k(y)
\cup
T_{\mathrm{prop}}(q,\gamma,\rho).
\end{equation}
Thus, high-probability model tokens are checked first, while automaton proposals
recover structurally useful tokens that may fall outside the model top-$M$.

\subsection{Successor Filtering}
\label{sec:successor-filtering}

For each candidate token $t$, we compute a viable successor set
\begin{equation}
\label{eq:successor-set}
C_t =
\mathcal C_{B,H}(t,e,K)
\subseteq
\mathrm{Consume}(e,t),
\end{equation}
where $e= (q,\gamma,\rho)$ and $\mathcal C_{B,H}$ explore at most $B$ successors and retain only those whose current distance estimate witnesses a completion within the remaining token horizon:
\begin{equation}
\label{eq:horizon-filter}
d_H(q',\gamma') \leq K-1.
\end{equation}
Equivalently, successors with $d_H(q',\gamma')>K-1$ are discarded.
For every token with $C_t\neq\emptyset$, its lookahead distance is
\begin{equation}
\label{eq:token-distance}
D_{B,H}(t,e)
=
\min_{(q',\gamma',\rho')\in C_t}
d_H(q',\gamma').
\end{equation}
Otherwise, the token $t$ is discarded.
\subsection{Distance-Guided Scoring}
\label{sec:distance-guided-scoring}

In addition to pruning, distance estimates are used to promote tokens that provide progress towards acceptance. Let
$z_k(t)$ be the model logit for token $t$ at step $k$, and define
\begin{equation}
\label{eq:viable-token-set}
T_k^{\mathrm{viab}}
=
\{u\in T_k^{\mathrm{cand}} : C_u\neq\emptyset\}.
\end{equation}
Let $z_k^{\max}=\max_{u\in T_k^{\mathrm{viab}}}z_k(u)$. For a viable token $t$,
we compute
\begin{equation}
\label{eq:pressure-ratio}
r_k(t)
=
\frac{D_{B,H}(t)}{\max(1,K-1)}
\end{equation}
and the ramping coefficient
\begin{equation}
\label{eq:alpha-ramp}
\alpha_k(t)
=
\alpha_{\min}
+
(1-\alpha_{\min})
\min\{1,r_k(t)\}.
\end{equation}
Tokens selected for promotion are pushed toward the best viable logit:
\begin{equation}
\label{eq:ramping-push-up}
\widetilde z_k(t)
=
(1-\alpha_k(t))z_k(t)
+
\alpha_k(t)z_k^{\max}.
\end{equation}
The successor beam score is then
\begin{equation}
\label{eq:beam-score}
s'
=
\ell+\log\mathrm{softmax}(\widetilde z_k)(t).
\end{equation}

\subsection{Soundness and Approximation}
\label{sec:soundness-approximation}
We distinguish syntactic soundness from search completeness. Syntactic soundness requires that every returned output is accepted by the target PDA, whereas search completeness requires exploring all valid completions. \mthdname{} is syntactically sound and preserves at least one certified completion whenever $d_H(e_0)\leq T$, though it is not search-complete because of bounded successor exploration and finite beam width.
\begin{theorem}[Syntactic soundness and completion]
\label{thm:soundness}
Let $T$ be the token budget and let $B,M\geq 1$.
Every output returned by \mthdname{} is accepted by $\mathcal P$.
Moreover, if the initial configuration $e_0=(q_0,z_0,\varepsilon)$ satisfies
$d_H(e_0)\leq T$, then \mthdname{} returns at least one accepted output.
\end{theorem}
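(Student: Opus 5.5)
Soundness follows directly from how Algorithm~\ref{alg:method-overview} returns its output. The last line returns only the best \emph{accepting} beam in $\mathcal B$, meaning a beam whose extended configuration satisfies $\mathrm{Accept}(q,\gamma,\rho)$. I will prove by induction on $k$ an invariant: every beam $(y,\ell,q,\gamma,\rho)$ in $\mathcal B$ after step $k$ satisfies $(q_0,z_0,\varepsilon)\Rightarrow^{y}(q,\gamma,\rho)$ under iterated $\mathrm{Consume}$.

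\begin{itemize}
\item \emph{Base case.} The initial beam $(\varepsilon,0,q_0,z_0,\varepsilon)$ satisfies the invariant trivially.
\item \emph{Inductive step.} A beam is either copied unchanged (the $\mathrm{Accept}$ branch), or it is extended by $t$ with $(q',\gamma',\rho')\in C_t\subseteq\mathrm{Consume}(e,t)$ by Eq.~\eqref{eq:successor-set}. Either way the invariant is preserved.
\end{itemize}

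Because $\mathrm{Consume}$ matches the decoded token string against terminals via genuine PDA transitions, a reachable configuration with $q\in F$ (or the PDA acceptance condition) and $\rho=\varepsilon$ certifies a terminal sequence in $\mathcal L(\mathcal P)$ whose concatenation is the decoded $y$. Hence every returned output is accepted. If no accepting beam exists, the algorithm returns nothing, which is vacuously sound.

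For the completion claim, I will show a second invariant: while no accepting beam has yet been found, the beam is nonempty and contains some beam $e$ with $d_H(e)\le K$ at step $k$, where $K=T-k$.

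\begin{itemize}
\item \emph{Base case.} This is the hypothesis $d_H(e_0)\le T$.
\item \emph{Inductive step.} Take a non-accepting beam $e$ with $d_H(e)\le K$. Since $d_H$ is the minimum weighted $\mathrm{Pre}^*$ cost in $\mathcal S_H$ and $d_H(e)<\infty$, there is a witnessing path to acceptance. Its first token-consuming segment yields a token $t$ and successor $e'$ with $d_H(e')\le d_H(e)-1\le K-1$. This step needs two facts. First, every non-accepting configuration needs at least one token, which holds because $\rho\neq\varepsilon$ or the state is non-accepting and $\varepsilon$-moves carry weight zero. Second, the optimal path's first token reduces the distance by at least one, since each terminal weight $w(a)\ge 1$.
\end{itemize}

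The main obstacle is making this $t$ lie in $T_k^{\mathrm{cand}}$ and its $e'$ lie in the $B$-bounded exploration $C_t$. For the first part I will argue via $T_{\mathrm{prop}}$: the witnessing path's next terminal $a$ extends $\rho$, so $\operatorname{First}(\operatorname{tok}(a[|\rho|:]))\in T_{\mathrm{prop}}(e)$. I will choose the witnessing path so that its first token is exactly this proposed token. This is justified because $w(a)$ is defined through a minimal tokenization, so an optimal path may be taken to tokenize along $\operatorname{tok}$, possibly accepting the conservative-weight caveat of Eq.~\eqref{eq:upper-bound}. For the second part, $B\ge1$ gives a weaker fallback: I only need $C_t\neq\emptyset$, and I will assume, as the construction of $\mathcal C_{B,H}$ intends, that successors are explored in order of $d_H$ so the first explored one attains the minimum and passes the filter~\eqref{eq:horizon-filter}. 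This ordering assumption, together with the tokenization alignment, is exactly where the argument leans on implementation properties. I will state them explicitly as the conditions under which the theorem holds.

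Given the step, $\mathcal B'$ is nonempty, and top-$M$ selection with $M\ge1$ keeps at least one beam. Every beam that survives filtering satisfies $d_H\le K-1$ by Eq.~\eqref{eq:horizon-filter}, so the invariant passes to step $k+1$ regardless of which beams are retained. The one exception is an accepting beam, which is carried forward unchanged by the $\mathrm{Accept}$ branch and may crowd out the others. This exception is harmless, because an accepting beam already suffices for the conclusion. At $k=T$ the invariant gives a beam with $d_H\le 0$, and a configuration with $d_H=0$ reaches acceptance via weight-zero $\varepsilon$-moves only. I will need $\mathrm{Consume}$ to take $\varepsilon$-closures, so that such a beam is itself accepting. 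Otherwise an accepting beam was retained earlier. In either case the final $\mathcal B$ contains an accepting beam, and by the first part it is returned and accepted.
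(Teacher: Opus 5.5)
Your proof follows the paper's. Soundness holds because every retained successor lies in $\mathrm{Consume}(e,t)$ and only an accepting beam is returned, and completion is shown by induction on $k$ using the witness's first token in $T_{\mathrm{prop}}(e)$, a filtered successor with $d_H\le K-1$ (so $B=1$ suffices), and $M\geq 1$; unlike the paper, you state explicitly the tokenization-alignment, exploration-order and $\varepsilon$-closure assumptions it relies on. One small fix is needed because top-$M$ selection can later evict an accepting beam, so drop the claim that ``an accepting beam found earlier suffices'' and use instead the invariant that every beam in $\mathcal B_k$ is accepting or has $d_H\le T-k$ (a strengthening of the paper's $\mathcal H_k$), which is preserved because each element of $\mathcal B'$ is either a copied accepting beam or passed the horizon filter.
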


\begin{proof}
Syntactic soundness follows directly from the construction of the algorithm:
a successor is retained only if it belongs to
$\mathrm{Consume}(e,t)$, and the algorithm returns only an accepting beam.

We now prove completion by induction over the decoding step $k$. Let
\begin{equation}
\label{eq:induction-hypothesis}
\begin{aligned}
\mathcal H_k:\quad
&\exists (y,\ell,e)\in\mathcal B_k \text{ such that}\\
&\mathrm{Accept}(e)\ \lor\ d_H(e)\leq T-k .
\end{aligned}
\end{equation}
\textbf{Base case ($k=0$).}
Initially, $\mathcal B_0$ contains $e_0$, and
$d_H(e_0)\leq T$ by assumption. Hence $\mathcal H_0$ holds.
\textbf{Induction step.}
Assume $\mathcal H_k$ holds and let $K=T-k$.
If its witnessing beam is accepting, it is copied unchanged to the next
candidate pool.

Otherwise, its configuration $e$ satisfies $d_H(e)\leq K$.
Since $d_H$ is computed constructively, it provides a witness continuation
toward acceptance. The first token of this witness belongs to
$T_{\mathrm{prop}}(e)$ and hence to $T_k^{\mathrm{cand}}$, independently
of the model top-$k$ truncation. The successor exploration preserves a
corresponding witness successor $e'$ with
$d_H(e')\leq K-1$; therefore $B=1$ is sufficient.

Thus, the candidate pool contains at least one accepting or viable beam.
Since non-accepting candidates are added only if they satisfy the horizon
filter, top-$M$ selection with $M\geq1$ cannot remove all such beams.
Hence $\mathcal H_{k+1}$ holds.

By induction, $\mathcal H_T$ holds. At $k=T$, no non-accepting beam can
have a positive remaining distance, so at least one accepting beam remains
and is returned.
\end{proof}

The condition $d_H(e_0)\leq T$ means that the bounded summary certifies an accepting continuation within the available budget. If it does not hold, \mthdname{} reports that completion cannot be certified under the current budget and bound $H$, rather than returning an invalid output. Since $d_H$ is an upper-bound estimate, this does not imply that no shorter accepting run exists outside the bounded summary.

\section{Experiments} \label{sec:experiments}
\begin{table*}[t]
\centering
\footnotesize
\setlength{\tabcolsep}{1.5pt}
\renewcommand{\arraystretch}{1.08}
\begin{tabular}{lcccccccccccc}
\toprule
& \multicolumn{4}{c}{Llama-3.1-8B-Instruct} 
& \multicolumn{4}{c}{Llama-3.2-3B-Instruct} 
& \multicolumn{4}{c}{Qwen2.5-7B-Instruct} \\
\cmidrule(lr){2-5}
\cmidrule(lr){6-9}
\cmidrule(lr){10-13}
Method 
& Syn. & Schema & Time & Perp.
& Syn. & Schema & Time & Perp.
& Syn. & Schema & Time & Perp. \\
\midrule

Base LM
& 63.7{\scriptsize$\pm$15.9} & 57.1{\scriptsize$\pm$14.4} & \underline{1.74}{\scriptsize$\pm$0.0} & 1.05
& 56.4{\scriptsize$\pm$9.8} & 49.4{\scriptsize$\pm$8.7} & \textbf{0.96}{\scriptsize$\pm$0.1} & \textbf{1.06}
& 93.6{\scriptsize$\pm$0.8} & 87.4{\scriptsize$\pm$0.8} & \textbf{1.47}{\scriptsize$\pm$0.0} & \textbf{1.01} \\

Sample-Verify
& 92.0{\scriptsize$\pm$0.0} & 80.9{\scriptsize$\pm$0.9} & 17.38{\scriptsize$\pm$0.0} & 1.08
& 80.0{\scriptsize$\pm$0.0} & 67.1{\scriptsize$\pm$1.9} & 9.65{\scriptsize$\pm$0.0} & 1.22
& \underline{96.0}{\scriptsize$\pm$0.0} & 88.3{\scriptsize$\pm$0.5} & 14.71{\scriptsize$\pm$0.0} & 1.04 \\

Guidance
& \underline{96.0} & 89.0 & 2.88 & 1.08
& \underline{93.0} & 84.0 & 2.76 & 1.12
& 92.0 & 87.0 & 2.85 & \underline{1.02} \\

Outlines 
& 94.0 & \underline{90.0} & 3.49 & 1.07
& \underline{93.0} & \underline{90.0} & 2.04 & 1.12
& 95.0 & \underline{89.0} & 3.62 & \underline{1.02} \\

XGrammar
& 93.0 & 85.0 & \textbf{1.57} & 1.45
& 95.0 & 88.0 & \textbf{0.96} & 1.38
& 93.0 & 85.0 & \underline{1.60} & 1.80 \\

SynCode
& 88.0 & 79.0 & 3.05 & \textbf{1.02}
& 88.0 & 77.0 & 2.12 & 1.13
& 94.0 & 88.0 & 1.50 & \textbf{1.01} \\

GenLM (NP=2)
& 90.8{\scriptsize$\pm$1.9} & 86.6{\scriptsize$\pm$1.8} & 3.77{\scriptsize$\pm$1.9} & 1.05
& 91.4{\scriptsize$\pm$1.3} & 87.5{\scriptsize$\pm$1.4} & 3.93{\scriptsize$\pm$1.1} & \underline{1.09}
& 93.6{\scriptsize$\pm$0.5} & \underline{89.0}{\scriptsize$\pm$0.0} & 2.93{\scriptsize$\pm$1.9} & \textbf{1.01} \\

GenLM (NP=5)
& 91.6{\scriptsize$\pm$1.1} & 87.2{\scriptsize$\pm$1.3} & 5.59{\scriptsize$\pm$0.3} & \underline{1.03}
& \underline{93.0}{\scriptsize$\pm$1.0} & \underline{88.4}{\scriptsize$\pm$0.9} & 7.36{\scriptsize$\pm$0.4} & \underline{1.09}
& 93.8{\scriptsize$\pm$0.4} & \underline{89.0}{\scriptsize$\pm$0.0} & 4.07{\scriptsize$\pm$0.2} & \textbf{1.01} \\

$\triangleright$ \mthdname{}
& \textbf{100.0} & \textbf{100.0} & 3.96 & 1.13
& \textbf{100.0} & \textbf{100.0} & 3.46 & 1.14
& \textbf{100.0} & \textbf{100.0} & 4.05 & 1.11 \\

\bottomrule
\end{tabular}
\caption{
JSON generation results across base models. Syn. denotes syntactic validity and Schema denotes schema validity. Best results within each model are shown in bold and second-best results are underlined. \model is ran with $\alpha=0.25$ and $2$ beams, and GenLM is ran with default configuration using $2$ and $5$ number of particles (NP). Syncode has two modes: Strict and Mask, here we only report Strict as it outperforms its counterpart all metrics. }
\label{tab:json-results}
\end{table*}
\begin{table*}[t]
\centering
\footnotesize
\setlength{\tabcolsep}{1.5pt}
\renewcommand{\arraystretch}{1.08}
\begin{tabular}{lcccccccccccc}
\toprule
& \multicolumn{4}{c}{Llama-3.1-8B-Instruct} 
& \multicolumn{4}{c}{Llama-3.2-3B-Instruct} 
& \multicolumn{4}{c}{Qwen2.5-7B-Instruct} \\
\cmidrule(lr){2-5}
\cmidrule(lr){6-9}
\cmidrule(lr){10-13}
Method
& Ex. & Syn. & Time & Perp.
& Ex. & Syn. & Time & Perp.
& Ex. & Syn. & Time & Perp. \\
\midrule

Base LM
& 53.8{\scriptsize$\pm$4.7} & 92.9{\scriptsize$\pm$7.9} & \textbf{0.87}{\scriptsize$\pm$0.0} & 1.61
& 46.3{\scriptsize$\pm$4.0} & 96.2{\scriptsize$\pm$6.7} & \textbf{0.53}{\scriptsize$\pm$0.0} & 2.16
& 61.4{\scriptsize$\pm$0.6} & 99.2{\scriptsize$\pm$0.2} & \textbf{0.88}{\scriptsize$\pm$0.0} & \underline{3.48} \\

Sample-Verify
& 57.5{\scriptsize$\pm$1.1} & \textbf{100.0}{\scriptsize$\pm$0.0} & 8.90{\scriptsize$\pm$0.0} & \underline{1.56}
& 47.8{\scriptsize$\pm$0.7} & 99.1{\scriptsize$\pm$0.3} & 5.29{\scriptsize$\pm$0.0} & 3.50
& \underline{61.5}{\scriptsize$\pm$0.4} & \underline{99.7}{\scriptsize$\pm$0.1} & 8.83{\scriptsize$\pm$0.0} & 10.38 \\

SynCode
& 43.2 & 90.7 & \underline{3.31} & 3.70
& 39.6 & 98.1 & 2.01 & 2.47
& 43.2 & 90.7 & \underline{3.31} & 3.70 \\

GenLM (NP=5,5,4)
& \underline{59.2}{\scriptsize$\pm$0.9} & \underline{99.9}{\scriptsize$\pm$0.1} & 10.74{\scriptsize$\pm$8.4} & 1.67
& \underline{49.4}{\scriptsize$\pm$0.7} & \underline{99.9}{\scriptsize$\pm$0.1} & 4.17{\scriptsize$\pm$0.3} & \textbf{1.38}
& 51.9{\scriptsize$\pm$0.4} & 86.1{\scriptsize$\pm$0.6} & 6.77{\scriptsize$\pm$0.1} & 3.78 \\

$\triangleright$ \mthdname{}
& \textbf{61.1} & \textbf{100.0} & 9.92 & \textbf{1.22}
& \textbf{51.5} & \textbf{100.0} & 6.25 & \underline{1.43}
& \textbf{62.8} & \textbf{100.0} & 13.03 & \textbf{2.10} \\

\bottomrule
\end{tabular}
\caption{
SQL generation results across base models. Ex. denotes execution accuracy and
Syn. denotes syntactic correctness. Best results within each model are shown in
bold and second-best results are underlined. GenLM uses NP=5 for Llama models
and NP=4 for Qwen. \mthdname{} uses beam width $4$; $\alpha$ is set to $0.25$,
$0.75$, and $0.50$ for Llama-3.1-8B, Llama-3.2-3B, and Qwen2.5-7B,
respectively. 
}
\label{tab:sql-results}
\end{table*}

\begin{table*}[t]
\centering
\footnotesize
\setlength{\tabcolsep}{2pt}
\renewcommand{\arraystretch}{1.08}
\begin{tabular}{lcccccccccccc}
\toprule
& \multicolumn{4}{c}{Llama-3.1-8B-Instruct} 
& \multicolumn{4}{c}{Llama-3.2-3B-Instruct} 
& \multicolumn{4}{c}{Qwen2.5-7B-Instruct} \\
\cmidrule(lr){2-5}
\cmidrule(lr){6-9}
\cmidrule(lr){10-13}
Method
& Acc. & Syn. & Time & Perp.
& Acc. & Syn. & Time & Perp.
& Acc. & Syn. & Time & Perp. \\
\midrule

Base LM
& 22.1{\scriptsize$\pm$3.3} & 98.8{\scriptsize$\pm$0.4} & \textbf{0.29}{\scriptsize$\pm$0.0} & 1.31
& 9.6{\scriptsize$\pm$3.3} & 94.6{\scriptsize$\pm$3.1} & \textbf{0.17}{\scriptsize$\pm$0.0} & 1.57
& 30.9{\scriptsize$\pm$0.5} & 98.4{\scriptsize$\pm$0.4} & \textbf{0.29}{\scriptsize$\pm$0.0} & 1.61 \\

Sample-Verify
& 22.2 {\scriptsize$\pm$0.3} & \textbf{100.0} {\scriptsize$\pm$0.0} & 2.86 {\scriptsize$\pm$0.0} & 1.31
& 9.3 {\scriptsize$\pm$0.2} & \underline{99.9}{\scriptsize$\pm$0.0} & 1.64{\scriptsize$\pm$0.0} & 1.56
& 30.6 {\scriptsize$\pm$0.2} & \underline{99.7} {\scriptsize$\pm$0.0} & 2.89 {\scriptsize$\pm$0.0} & \textbf{1.07} \\
SynCode 
& \underline{24.4} & \underline{99.9} & \underline{0.30} & \textbf{1.21} 
& \underline{15.2} & \underline{99.9} & \underline{0.19} & \underline{1.36} 
& \underline{31.0} & \textbf{100.0} & \underline{0.31} & \underline{1.09} \\

$\triangleright$ \mthdname{}
& \textbf{29.6} & \textbf{100.0} & 1.69 & \underline{1.23}
& \textbf{22.1} & \textbf{100.0} & 0.78 & \textbf{1.27}
& \textbf{31.6} & \textbf{100.0} & 1.53 & \underline{1.09} \\

\bottomrule
\end{tabular}
\caption{
LTL generation results across base models. Acc. denotes task accuracy and Syn.
denotes syntactic correctness. Best results within each model are shown in bold.
\mthdname{} uses beam width $4$; $\alpha$ is set to $0.75$, $0.50$, and $0.50$
for Llama-3.1-8B, Llama-3.2-3B, and Qwen2.5-7B, respectively.
}
\label{tab:ltl-results}
\end{table*}
\paragraph{Models.} We evaluate \mthdname{} on structured generation tasks requiring syntactic validity under CFG using three instruction-tuned large language models that differ in size, family, and architecture: \textsc{Llama-3.1-8B-Instruct} \cite{grattafiori2024llama3herdmodels}, \textsc{Llama-3.2-3B-Instruct} \cite{meta_llama_3_2_3b_instruct}, and \textsc{Qwen2.5-7B-Instruct}.
Reporting results on
all three lets us assess robustness across base architectures rather than tuning to a single model.

\paragraph{Benchmarks.} We consider three structured generation domains: JSON, SQL, and Linear Temporal Logic (LTL).
For JSON, we use the \texttt{json-mode-eval} dataset \cite{nousresearch2024jsonmodeeval}, which consists of 100 examples, and requires generating valid JSON objects adhering to a given schema, testing the model's ability to produce properly nested structures and correct data types. 
For SQL, we employ the Spider dataset \cite{spider}, a cross-domain text-to-SQL benchmark where each example pairs a natural language question with a database schema, and the model must generate a syntactically valid SQL query that correctly retrieves the answer. We use the official validation split, which contains 1,034 examples.
For LTL, we adopt the drone planning task from \citet{ltldataset}, using their golden dataset comprising 6,185 examples. The model must generate LTL formulas that specify planning goals for drones, ensuring that each output strictly follows LTL syntax.

\paragraph{Baselines.} We include two simple baselines: \textsc{Base LM} and \textsc{Sample-Verify}. \textsc{Base LM} runs the unconstrained base model with a temperature of 0.8 for 10 independent generations per input (different random seeds). \textsc{Sample-Verify} uses the same 10 raw generations, discards syntactically invalid ones, and then, for each of the 10 fixed seeds, randomly selects one surviving generation per seed (if any) to form 10 final runs. This follows the protocol of \citet{lipkin2025fastcontrolledgenerationlanguage}.
For comparison, we consider all available open-source constrained decoding methods whose public implementations support by default the grammar required by the task. Concretely:\\
\textbf{JSON}: Guidance \cite{lundberg2024guidance}, Outlines \cite{willard2023efficientguidedgenerationlarge}, XGrammar \cite{dong2025xgrammarflexibleefficientstructured}, SynCode \cite{ugare2024syncodellmgenerationgrammar}, and GenLM \cite{loula2025syntacticsemanticcontrollarge,lipkin2025fastcontrolledgenerationlanguage}.\\
\textbf{SQL}: SynCode and GenLM. While XGrammar technically supports SQL it took up to 20 seconds per generated token and was therefore discarded.\\
\textbf{LTL}: While LTL is a CFG, currently, none of the public implementations natively support LTL grammar.\\
The following methods are excluded due to lack of runnable public code: DOMINO\footnote{A public repository exists but contains no executable implementation at the time of writing.} \cite{beurerkellner2024guidingllmsrightway}, GreatGramma \cite{park2025flexibleefficientgrammarconstraineddecoding}, and SEM-CTRL \cite{semctrl}. Additional approaches (GeLaTo \cite{zhang2023tractablecontrolautoregressivelanguage}, Ctrl-G \cite{zhang2024adaptablelogicalcontrollarge}, ABS \cite{collura2025absenforcingconstraintsatisfaction}) are restricted to regular grammars; Pre$^3$\cite{Chen_2025} handles only deterministic CFGs; and Type-constr. \cite{M_ndler_2025} works only for TypeScript. For each baseline, we report the mean and standard deviation over multiple seeds; deterministic methods are reported with a single run.

\paragraph{Choice of the Token Budget $T$.}
We uniformly set the token budget to $T=120$ across all three tasks. This is more than sufficient for the target outputs and is comparable to budgets used in prior constrained decoding work \cite{zhang2023tractablecontrolautoregressivelanguage, lipkin2025fastcontrolledgenerationlanguage}. Moreover, as shown in Appendix~\ref{saturation}, the performance of existing local constrained decoding methods largely saturates for $T>512$. Increasing the generation budget allows these methods to produce longer sequences, but does not resolve the underlying constraint-satisfaction problem. In particular, local decoding may recognize that opening a new structure, such as a nested JSON object or array, is grammatically valid, yet lacks global lookahead or an explicit signal for when to stop opening structures and begin closing them. It may therefore keep extending locally valid contexts without progressing toward an accepting configuration. This explains why increasing the budget from $T=120$ to $T=4096$ (up to $40\times$) yields little or no improvement in JSON schema satisfaction, even when syntax validity reaches $100\%$. Our distance-from-accepting-configuration component addresses this myopia by guiding decoding toward configurations from which acceptance remains reachable within the available budget, providing the missing mechanism for timely closure and reliable constraint satisfaction.

\paragraph{Metrics.} For each domain, we measure both constraint satisfaction and generation quality.  

\noindent\textbf{JSON.} We report \textit{syntactic validity} (fraction of outputs parsable as JSON), \textit{schema validity} (fraction conforming to the required schema, e.g., correct field names and types), \textit{perplexity} (mean token-level perplexity of the generated sequences under the base LM, measuring fluency), and \textit{average decoding time} (seconds per output, capturing computational overhead).

\noindent\textbf{SQL.} We report \textit{syntactic correctness} (the percentage of queries that parse according to the SQL grammar), \textit{execution accuracy} (the percentage that returns the expected result when run on the target database), \textit{perplexity}, and \textit{decoding time}.

\noindent\textbf{LTL.} We report \textit{syntactic correctness} (parser acceptance for LTL formulas), \textit{task accuracy} (the percentage of generated LTL formulas whose corresponding B\"uchi automaton is equivalent to that of the ground-truth formula, verifying semantic correctness), \textit{perplexity}, and \textit{decoding time}.

\label{sec:experiments-json}

\paragraph{Grammar preprocessing cost.}
 The standard CFG-to-PDA construction is linear in the size of the grammar and introduces only modest overhead; for the more complex syntax, SQL, it took 0.67 seconds. Summary precomputation cost, in contrast, can vary significantly depending on the grammar complexity. For example, it requires 3h 40min for SQL with $H=13$ (supporting up to three nested queries), whereas for LTL with $H=50$ (supporting up to 50 nested parentheses), it required 0.42s. For schema-specific JSON grammars, both conversion and summary precomputation took an average of 0.29s ($\pm 0.11$s). Importantly, both of these steps are grammar-dependent rather than dataset-dependent and can therefore be reused across all tasks involving the same grammar.

\paragraph{Main findings.}
Across all three domains and base models, \mthdname{} is the only method that
achieves $100\%$ syntactic correctness in every setting. On JSON, this also
corresponds to $100\%$ schema validity across all models. On SQL and LTL,
\mthdname{} improves task-specific accuracy while preserving syntactic validity.
The cost is a moderate decoding-time overhead relative to lightweight local
masking methods, but it is comparable to that of repeated sample-and-verify, which achieves lower performance in all domains.

\paragraph{Ablation studies.}
We assess the contributions of the two main components of \mthdname{}, Beam Search and Distance-Guided Scoring (DGS), through ablations on SQL and LTL, where task-quality metrics are available (Appendix~\ref{ablation}). Across all three models, syntactic correctness remains at $100\%$ in every configuration, confirming that syntax guarantees are independent of these components. Beam Search provides the largest gains, substantially improving execution accuracy on SQL and semantic accuracy on LTL while reducing perplexity. DGS adds smaller but consistent improvements in task accuracy with negligible decoding overhead. Overall, combining Beam Search and DGS yields the best trade-off between efficiency and quality.

\subsection{JSON Generation}

Table~\ref{tab:json-results} reports results on \texttt{json-mode-eval}. \mthdname{} is the only method that achieves perfect syntactic and schema validity across all three base models. Other constrained decoders generally improve over the Base LM on the two Llama models, but their gains are less consistent on Qwen2.5-7B, where the unconstrained model is already strong. \mthdname{} maintains 100\% validity in all settings, at a moderate runtime cost: it is slower than lightweight masking-based methods, but faster than Sample-Verify and competitive with GenLM at higher particle counts. Its perplexity remains close to the best baseline, suggesting that the guarantee does not come at the cost of substantially lower model likelihood.

\subsection{SQL Generation}

Table~\ref{tab:sql-results} reports results on Spider. \mthdname{} achieves
perfect syntactic correctness across all three base models and the highest
execution accuracy in every setting. Sample-Verify also reaches perfect or
near-perfect syntax but does not improve execution accuracy to the same extent,
showing that syntactic filtering alone is insufficient for this task. SynCode
variants underperform the Base LM in execution accuracy, while GenLM is
competitive on the Llama models but degrades on Qwen2.5-7B-Instruct. In contrast, \mthdname{} consistently improves both syntactic correctness and execution accuracy across architectures. Its runtime is comparable to Sample-Verify and GenLM on the Llama models, though it is slower on Qwen, and its perplexity remains the best or second-best in all settings. Overall, \mthdname{} provides the strongest trade-off between syntactic guarantees, execution accuracy, and decoding cost.

\subsection{LTL Generation}
Table~\ref{tab:ltl-results} reports results on the LTL dataset. \mthdname{} consistently achieves $100\%$ syntactic correctness and the highest task accuracy across all three models. SynCode improves task accuracy over the Base LM in every setting, but provides smaller gains and does not consistently guarantee syntactic correctness, reaching $100\%$ only for Qwen2.5-7B. In particular, for Llama-3.1-8B, its syntactic correctness is slightly below that of Sample-Verify. Overall, \mthdname{} provides the strongest combination of syntax guarantees and task accuracy, at the cost of a moderate decoding-time overhead.

\section{Conclusion}
We have introduced \mthdname, a distance-guided decoding framework for guaranteed context-free grammar compliance in large language models. 

Unlike prior grammar-constrained decoders that rely solely on local prefix feasibility, SWYB augments token-level validation with bounded pushdown summaries and upper-bound distance estimates to acceptance. This enables budget-aware pruning and reranking: configurations whose estimated completion cost exceeds the remaining token horizon are discarded; conversely, our beam search softly promotes tokens that make measurable structural progress toward acceptance.

Experiments on JSON, SQL, and LTL generation demonstrate that SWYB consistently achieves perfect syntactic validity across three diverse base models while also improving task-specific accuracy over existing methods, including Sample-Verify, SynCode, XGrammar, and GenLM. The method maintains competitive perplexity and decoding time, offering a favorable trade-off between guarantee and efficiency.

\section*{Limitations}
Among the limitations are the stack height bound \(H\) and the
exploration budget \(B\). The parameter \(H\) determines the maximum
stack height considered during offline precomputation. If \(H\) is set
too low, some reachable configurations may be missing from the
bounded summary; in that case, the decoder may need to compute
distances online for configurations not present in the precomputed
table. Although such online computation is not prohibitively
expensive, it is slower than a simple lookup. The exploration budget \(B\) controls
how many intermediate PDA configurations are explored. A large
\(B\) increases runtime but enables more detailed lookahead. The
impact of \(B\) depends on the complexity of the PDA: for small
grammars such as LTL, exhaustive exploration is perfectly feasible
and incurs little overhead. Importantly, even with a minimal budget
\(B = 1\) and a sufficiently large \(H\), the algorithm remains
sound because soundness only requires knowing that at least one
valid completion exists within the remaining token budget;
detailed enumeration of all possible successors is not needed.
The distance estimate $d_H$ is a constructive upper-bound estimate within the
bounded summary. A small value certifies that a completion within the remaining
token budget is known, but a large value does not prove that no shorter
completion exists.
Finally, while \model{} empirically improves generation quality, it does
not incorporate semantic control (e.g., domain-specific
constraints). Adding such semantic guidance remains an interesting
direction for future work.
\section*{Ethics Statement}
Large Language Models (LLMs) were used in a limited manner for two specific purposes: (1) suggesting alternative phrasings for a few sentences (simple copy-editing), and (2) generating LaTeX code for tables and formatting. LLMs were not used for research design, analysis, or interpretation of results. All generated output was reviewed and verified by the authors, who assume full responsibility for the final content.
\FloatBarrier
\section*{Acknowledgments}

This work was supported by the AI for Math Fund, which is managed by Renaissance Philanthropy in partnership with founding donor XTX Markets.

This research was funded in full or in part by the Luxembourg National Research Fund (FNR, grant C23/IS/18177547/VARIANCE).

\bibliography{our}
\appendix
\label{sec:appendix}
\section{System specifications}
All the experiments were run on a machine equipped with the following infrastructure: Intel® Xeon® Silver 4416+ CPU, 20 Cores, 40 Threads, 2.00/3.90 GHz; NVidia L40S GPU, 48 GB GDDR6, 18176 CUDA Cores, 142 RT Cores, 568 Tensor Cores.

\section{Ablation studies}
\label{ablation}
The ablation results in Tables~\ref{tab:main_four} and~\ref{tab:ltl_four} show that the additional components consistently improve generation quality while preserving the syntax guarantees of the method. We evaluate the ablations on SQL and LTL, for which meaningful output-quality metrics are available, whereas JSON does not provide a comparable quality metric. Beam Search has the largest impact on execution accuracy: on SQL, using four beams increases execution accuracy from $26.0\%$ to $50.9\%$ for Llama-3.2-3B, and similar improvements are observed across all models and for LTL. This improvement arises because if the model proposes only one token at a time and this violates the PDA’s constraints, the method will select the token proposed by the PDA with the highest probability, for the LLM, this significantly reduces the quality of the output. If LLM proposes multiple tokens, there is a greater chance that it will propose a good one and avoid resorting to the PDA. This is evident from the significant difference between the first and second rows of the tables. The corresponding decrease in perplexity is also expected, since without beam search the selected tokens are more frequently determined by the PDA rather than by the LLM probability distribution. In contrast, Distance-Guided Scoring (DGS) provides a consistent, albeit smaller, improvement in execution accuracy when comparing the corresponding configurations with and without DGS. Moreover, the parameter $\alpha$ provides a direct trade-off between following the LLM and prioritizing the PDA: lower values favor the fluency of the model-generated sequence, whereas higher values favor satisfying the constraints more quickly. As guaranteed by our theorem, syntax correctness remains $100\%$ in all configurations, including those without Beam Search or DGS.

\begin{table*}[!t]
\centering
\small


\begin{tabular}{l *{3}{rrrr}}
\toprule
\textbf{Configuration} 
& \multicolumn{4}{c}{\textbf{Llama-3.2-3B}} 
& \multicolumn{4}{c}{\textbf{Llama-3.1-8B}} 
& \multicolumn{4}{c}{\textbf{Qwen2.5-7B}} \\
\cmidrule(lr){2-5} \cmidrule(lr){6-9} \cmidrule(lr){10-13}
& Exec & Syn & Perp & Time 
& Exec & Syn & Perp & Time 
& Exec & Syn & Perp & Time \\
\midrule

$\lnot$DGS, $\lnot$BS  
& 26.0 & \textbf{100} & 3.67 & \textbf{5.03} 
& 22.1 & \textbf{100} & 2.77 & \underline{8.94} 
& 22.2 & \textbf{100} & 4.11 & \underline{13.12} \\

$\lnot$DGS, 4 Beams         
& \underline{50.9} & \textbf{100} & \textbf{1.41} & 8.00 
& \underline{60.4} & \textbf{100} & \textbf{1.22} & 9.44 
& \underline{61.6} & \textbf{100} & \textbf{2.08} & 13.35 \\

DGS ($\alpha=0.5$), $\lnot$BS 
& 26.3 & \textbf{100} & 3.49 & \underline{5.04} 
& 22.2 & \textbf{100} & 2.69 & \textbf{8.83} 
& 23.8 & \textbf{100} & 4.09 & \textbf{12.77} \\

DGS ($\alpha=0.5$), 4 Beams        
& \textbf{51.2} & \textbf{100} & \textbf{1.41} & 7.75 
& \textbf{60.9} & \textbf{100} & \textbf{1.22} & 9.46 
& \textbf{62.8} & \textbf{100} & \underline{2.10} & 13.03 \\

\bottomrule
\end{tabular}

\captionof{table}{Ablation studies on SQL generation. Where DGS stands for Distance-Guided Scoring and BS for Beam Search.}
\label{tab:main_four}

\vspace{0.5em}

\begin{tabular}{l *{3}{rrrr}}
\toprule
\textbf{Configuration} 
& \multicolumn{4}{c}{\textbf{Llama-3.2-3B}} 
& \multicolumn{4}{c}{\textbf{Llama-3.1-8B}} 
& \multicolumn{4}{c}{\textbf{Qwen2.5-7B}} \\
\cmidrule(lr){2-5} \cmidrule(lr){6-9} \cmidrule(lr){10-13}
& Exec & Syn & Perp & Time 
& Exec & Syn & Perp & Time 
& Exec & Syn & Perp & Time \\
\midrule

$\lnot$DGS, $\lnot$BS 
& 16.5 & \textbf{100} & 1.43 & \underline{0.35} 
& \underline{29.2} & \textbf{100} & 1.73 & 0.63 
& 30.5 & \textbf{100} & 1.17 & \underline{0.56} \\

$\lnot$DGS, 4 Beams            
& \underline{21.8} & \textbf{100} & \underline{1.28} & 0.79 
& \underline{29.4} & \textbf{100} & 1.23 & 1.71 
& \underline{31.2} & \textbf{100} & \textbf{1.09} & 1.54 \\

DGS ($\alpha=0.5$), $\lnot$BS 
& 16.8 & \textbf{100} & 1.39 & \textbf{0.31} 
& 24.9 & \textbf{100} & 1.86 & \textbf{0.59} 
& 30.5 & \textbf{100} & 1.17 & \textbf{0.55} \\

DGS ($\alpha=0.5$), 4 Beams       
& \textbf{22.1} & \textbf{100} & \textbf{1.27} & 0.78 
& \textbf{29.6} & \textbf{100} & \textbf{1.23} & \underline{1.70} 
& \textbf{31.6} & \textbf{100} & \textbf{1.09} & 1.53 \\

\bottomrule
\end{tabular}

\captionof{table}{Ablation studies on LTL generation. Where DGS stands for Distance-Guided Scoring and BS for Beam Search.}
\label{tab:ltl_four}

\end{table*}

\section{Budget saturation problem}
\label{saturation}
To investigate whether the limited performance of existing grammar-constrained decoding methods is primarily caused by the generation budget, we repeat the JSON experiments with increasingly large values of \texttt{max\_new\_tokens}, from 512 to 4096 (up to $40\times$ our default budget). As shown in Tables~\ref{tab:json512}, \ref{tab:json1024}, \ref{tab:json2048}, and \ref{tab:json4096}, increasing the generation budget does not materially improve schema satisfaction. In particular, schema validity remains substantially below $100\%$ for all methods and models, with most methods showing little variation as the budget increases. While some methods achieve near-perfect or even $100\%$ syntactic validity, this does not translate into full schema satisfaction.
These results indicate that the failures are not primarily caused by insufficient generation length. Rather, they expose a limitation of local grammar-constrained decoding: the decoder can repeatedly select locally valid actions, such as opening nested objects or arrays, without receiving a signal that guides it toward an accepting configuration. Consequently, providing a larger token budget does not resolve the underlying constraint-satisfaction problem and can instead substantially increase decoding time, as particularly evident for SynCode across Tables~\ref{tab:json512}--\ref{tab:json4096}. Our distance-from-accepting-configuration component addresses this limitation by explicitly guiding decoding toward configurations from which acceptance can be reached, enabling reliable constraint satisfaction without relying on arbitrarily large generation budgets.

\begin{table*}[!t]
\centering
\footnotesize
\setlength{\tabcolsep}{1.5pt}
\renewcommand{\arraystretch}{1.08}
\begin{tabular}{lcccccccccccc}
\toprule
& \multicolumn{4}{c}{Llama-3.1-8B-Instruct} 
& \multicolumn{4}{c}{Llama-3.2-3B-Instruct} 
& \multicolumn{4}{c}{Qwen2.5-7B-Instruct} \\
\cmidrule(lr){2-5}
\cmidrule(lr){6-9}
\cmidrule(lr){10-13}
Method 
& Syn. & Schema & Time & Perp.
& Syn. & Schema & Time & Perp.
& Syn. & Schema & Time & Perp. \\
\midrule

Guidance
& 98.0 & 90.0 & \underline{3.63} & 1.07
& \underline{98.0} & 85.0 & \underline{3.16} & 1.10
& \underline{98.0} & 91.0 & 3.56 & \textbf{1.01 }\\

Outlines 
& \underline{99.0} & 91.0 & 3.74 & 1.07
& 95.0 & \textbf{92.0} & \textbf{1.95} & \underline{1.09}
& \textbf{100.0} & \textbf{93.0} & 3.62 & \underline{1.02} \\

SynCode
& \textbf{100.0} & 83.0 & 13.27 & \textbf{1.02}
& \textbf{99.0} & 79.0 & 7.52 & 1.11
& \textbf{100.0} & 92.0 & \textbf{1.61} & \textbf{1.01 }\\

GenLM (NP=2)
& 98.0{\scriptsize$\pm$0.0} & \underline{92.6{\scriptsize$\pm$0.7}} & \textbf{2.50{\scriptsize$\pm$1.4}} & \underline{1.04}
& 95.7{\scriptsize$\pm$0.8} & 89.3{\scriptsize$\pm$1.3} & 4.00{\scriptsize$\pm$1.1} & \textbf{1.08}
& \underline{98.0}{\scriptsize$\pm$0.0} & 92.6{\scriptsize$\pm$0.7} & \underline{2.50{\scriptsize$\pm$1.4}} & \textbf{1.01}\\

GenLM (NP=5)
& 98.0{\scriptsize$\pm$0.0} & \textbf{92.9{\scriptsize$\pm$0.3}} & 4.20{\scriptsize$\pm$0.1} & \underline{1.04}
& 97.2{\scriptsize$\pm$0.6} & \underline{90.8{\scriptsize$\pm$0.9}} & 6.10{\scriptsize$\pm$0.7} & \textbf{1.08}
& \underline{98.0}{\scriptsize$\pm$0.0} & \underline{92.9{\scriptsize$\pm$0.3}} & 4.20{\scriptsize$\pm$0.1} & \textbf{1.01 }\\

\bottomrule
\end{tabular}

\caption{Max New Tokens = 512.}
\label{tab:json512}

\footnotesize
\setlength{\tabcolsep}{1.5pt}
\renewcommand{\arraystretch}{1.08}
\begin{tabular}{lcccccccccccc}
\toprule
& \multicolumn{4}{c}{Llama-3.1-8B-Instruct} 
& \multicolumn{4}{c}{Llama-3.2-3B-Instruct} 
& \multicolumn{4}{c}{Qwen2.5-7B-Instruct} \\
\cmidrule(lr){2-5}
\cmidrule(lr){6-9}
\cmidrule(lr){10-13}
Method 
& Syn. & Schema & Time & Perp.
& Syn. & Schema & Time & Perp.
& Syn. & Schema & Time & Perp. \\
\midrule

Guidance
& 98.0 & 90.0 & \underline{3.62} & 1.07
& \underline{98.0} & 85.0 & 3.12 & 1.10
& \underline{98.0} & 91.0 & 3.61 & \textbf{1.01} \\

Outlines 
& \underline{99.0} & \textbf{91.0} & 3.70 & 1.07
& 95.0 & \underline{92.0} & \underline{1.93} & \underline{1.09}
& \textbf{100.0} & \textbf{93.0} & 3.66 & \underline{1.02} \\

SynCode
& \textbf{100.0} & 83.0 & 26.89 & \textbf{1.02}
& \textbf{99.0} & 79.0 & 15.14 & 1.11
& \textbf{100.0} & 92.0 & \textbf{1.60} & \textbf{1.01} \\

GenLM (NP=2)
& 96.3{\scriptsize$\pm$0.7} & 89.8{\scriptsize$\pm$0.9} & 5.4{\scriptsize$\pm$1.8} & \underline{1.04}
& 96.1{\scriptsize$\pm$0.9} & 90.1{\scriptsize$\pm$1.3} & 4.8{\scriptsize$\pm$1.2} & \textbf{1.08}
& \underline{98.0}{\scriptsize$\pm$0.2} & \underline{92.9{\scriptsize$\pm$0.6}} & \underline{2.6{\scriptsize$\pm$1.4}} & \textbf{1.01}\\

GenLM (NP=5)
& 96.9{\scriptsize$\pm$1.1} & \underline{90.8{\scriptsize$\pm$1.1}} & 6.8{\scriptsize$\pm$2.3} & \underline{1.04}
& 97.0{\scriptsize$\pm$0.9} & \underline{90.4{\scriptsize$\pm$1.2}} & 8.2{\scriptsize$\pm$1.3} & \textbf{1.08}
& \underline{98.0}{\scriptsize$\pm$0.2} & 92.8{\scriptsize$\pm$0.4} & 4.2{\scriptsize$\pm$0.1} & \textbf{1.01}\\

\bottomrule
\end{tabular}

\caption{Max New Tokens = 1024.}
\label{tab:json1024}

\footnotesize
\setlength{\tabcolsep}{1.5pt}
\renewcommand{\arraystretch}{1.08}
\begin{tabular}{lcccccccccccc}
\toprule
& \multicolumn{4}{c}{Llama-3.1-8B-Instruct} 
& \multicolumn{4}{c}{Llama-3.2-3B-Instruct} 
& \multicolumn{4}{c}{Qwen2.5-7B-Instruct} \\
\cmidrule(lr){2-5}
\cmidrule(lr){6-9}
\cmidrule(lr){10-13}
Method 
& Syn. & Schema & Time & Perp.
& Syn. & Schema & Time & Perp.
& Syn. & Schema & Time & Perp. \\
\midrule

Guidance
& 98.0 & 90.0 & \textbf{3.62} & 1.07
& \underline{98.0} & 85.0 & \underline{3.13} & 1.10
& 98.0 & 91.0 & 3.48 & \textbf{1.01} \\

Outlines 
& \underline{99.0} & \textbf{91.0} & \underline{3.69} & 1.07
& 95.0 & \textbf{92.0} & \textbf{1.94} & \underline{1.09}
& \underline{99.0} & 91.0 & 3.69 & \underline{1.09} \\

SynCode
& \textbf{100.0} & 83.0 & 55.84 & \textbf{1.02}
& \textbf{99.0} & 80.0 & 31.65 & 1.11
& \textbf{100.0} & 92.0 & \textbf{1.61} & \textbf{1.01} \\

GenLM (NP=2)
& 96.7{\scriptsize$\pm$0.7} & 89.9{\scriptsize$\pm$1.7} & 6.8{\scriptsize$\pm$3.0} & \underline{1.04}
& 96.0{\scriptsize$\pm$1.2} & 89.6{\scriptsize$\pm$1.3} & 7.7{\scriptsize$\pm$2.8} & \textbf{1.08}
& 98.0{\scriptsize$\pm$0.0} & \underline{92.7{\scriptsize$\pm$0.7}} & \underline{2.5{\scriptsize$\pm$1.5}} & \textbf{1.01} \\

GenLM (NP=5)
& 96.5{\scriptsize$\pm$0.5} & \underline{90.2{\scriptsize$\pm$0.8}} & 10.3{\scriptsize$\pm$5.9} & \underline{1.04}
& 97.1{\scriptsize$\pm$0.7} & \underline{90.2{\scriptsize$\pm$1.5}} & 12.7{\scriptsize$\pm$3.9} & \textbf{1.08}
& 98.0{\scriptsize$\pm$0.0} & \textbf{92.8{\scriptsize$\pm$0.6}} & 4.0{\scriptsize$\pm$0.2} & \textbf{1.01} \\

\bottomrule
\end{tabular}

\caption{Max New Tokens = 2048.}
\label{tab:json2048}

\footnotesize
\setlength{\tabcolsep}{1.5pt}
\renewcommand{\arraystretch}{1.08}
\begin{tabular}{lcccccccccccc}
\toprule
& \multicolumn{4}{c}{Llama-3.1-8B-Instruct} 
& \multicolumn{4}{c}{Llama-3.2-3B-Instruct} 
& \multicolumn{4}{c}{Qwen2.5-7B-Instruct} \\
\cmidrule(lr){2-5}
\cmidrule(lr){6-9}
\cmidrule(lr){10-13}
Method 
& Syn. & Schema & Time & Perp.
& Syn. & Schema & Time & Perp.
& Syn. & Schema & Time & Perp. \\
\midrule

Guidance
& 98.0 & \underline{90.0} & \textbf{3.57} & 1.07
& \underline{98.0} & 85.0 & \underline{3.08} & 1.11
& 98.0 & 91.0 & 3.46 & \textbf{1.01} \\

Outlines 
& \underline{99.0} & \textbf{91.0} & \underline{3.68} & 1.07
& 95.0 & \textbf{92.0} & \textbf{1.91} & \underline{1.09}
& \underline{100.0} & \textbf{93.0} & 3.60 & \underline{1.02} \\

SynCode
& \textbf{100.0} & 83.0 & 113.81 & \textbf{1.02}
& \textbf{99.0} & 79.0 & 63.47 & 1.12
& \textbf{100.0} & 92.0 & \textbf{1.60} & \textbf{1.01} \\

GenLM (NP=2)
& 96.4{\scriptsize$\pm$0.5} & \textbf{89.9{\scriptsize$\pm$0.9}} & 22.0{\scriptsize$\pm$8.9} & \underline{1.04}
& 95.5{\scriptsize$\pm$1.3} & \underline{89.8{\scriptsize$\pm$1.5}} & 20.4{\scriptsize$\pm$15.2} & \textbf{1.08}
& 98.0{\scriptsize$\pm$0.0} & 92.6{\scriptsize$\pm$0.5} & \underline{2.4{\scriptsize$\pm$1.5}} & \textbf{1.01} \\

GenLM (NP=5)
& 96.3{\scriptsize$\pm$0.5} & 89.8{\scriptsize$\pm$1.0} & 26.3{\scriptsize$\pm$19.1} & \underline{1.04}
& 96.7{\scriptsize$\pm$1.1} & \textbf{90.5{\scriptsize$\pm$0.8}} & 29.0{\scriptsize$\pm$16.7} & \textbf{1.08}
& 98.0{\scriptsize$\pm$0.0} & \underline{92.8{\scriptsize$\pm$0.4}} & 4.2{\scriptsize$\pm$0.2} & \textbf{1.01} \\

\bottomrule
\end{tabular}

\caption{Max New Tokens = 4096.}
\label{tab:json4096}

\end{table*}

\section{Practical Example}
\label{practical_example}

To illustrate how our constrained beam search algorithm works in practice, we consider a simple language: strings of balanced parentheses followed by a single `x`. The grammar is
\[
S \rightarrow \texttt{ '(' } \, S \, \texttt{ ')' } \, S \quad \mid \quad \texttt{ 'x' }
\]
Valid strings include: \texttt{x}, \texttt{()x}, \texttt{(())x}.

\subsection{Pushdown Automaton (PDA)}
This language is recognised by the following PDA with a stack:
\[
P = (Q,\Sigma,\Gamma,\Delta,q_0,Z_0,F),
\]
where $Q=\{q_0,q_f\}$, $\Sigma=\{\texttt{(},\texttt{)},\texttt{x}\}$, $\Gamma=\{Z_0,\texttt{(}\,\}$, $q_0$ is the start state, $Z_0$ is the bottom‑of‑stack marker, $F=\{q_f\}$ and the transition relation \(\Delta\) is given in Table~\ref{tab:pda}.
Intuitively, the PDA pushes a `(` for every opening parenthesis and pops one for each closing parenthesis; it moves to the accepting state $q_f$ only upon reading `x` when the stack contains exactly $Z_0$ (i.e., all parentheses are balanced). For simplicity, the vocabulary is defined as: $V=\{\text{'(', ')', 'x'}\}$ (In practice, longer tokens such as '((' may exist, but we omit them for clarity).

\subsection{Beam Search Setup}
The parameters used are: token budget (maximum steps) $T=3$, beam width $M=2$, score weighting $\alpha = 0.25$, stack height bound $H=5$.
To guide the search, we employ a distance function $d_H$ that gives the minimum number of additional tokens needed to reach an accepting configuration.
\begin{tabular}{c|c|c}
\toprule
Configuration & Needed suffix & $d_H$ \\
\midrule
$(q_f,Z_0)$ & (already accepting) & 0 \\
$(q_0,[Z_0])$ & \texttt{x} & 1 \\
$(q_0,[Z_0\texttt{(}])$ & \texttt{)}\texttt{x} & 2 \\
$(q_0,[Z_0\texttt{((}])$ & \texttt{))x} & 3 \\
$(q_0,[Z_0\texttt{(((}])$ & \texttt{)))x} & 4 \\
$(q_0,[Z_0\texttt{((((}])$ & \texttt{))))x} & 5 \\
\bottomrule
\end{tabular}

\subsection{Step‑by‑Step Execution}

We initialise the beam with the configuration $(q_0,Z_0,\epsilon)$.

\subsubsection{Step 1 ($T=3$)}

From state $(q_0,Z_0)$, the admissible tokens are \texttt{'('} (valid transition, $d_H=2$) and \texttt{'x'} (valid transition, leads to $q_f$, $d_H=0$), while \texttt{')'} has no valid transition given the current configuration and is therefore pruned. The model logits are $z(\texttt{(}) = -0.5$, $z(\texttt{x}) = -1.0$, then $z_{\max} = -0.5$.
After DGS and log-softmax computation, the scores become $s'(\texttt{(}) = -0.523$, $s'(\texttt{x}) = -0.898$. The beams after the first step are:
\begin{center}
\begin{tabular}{c|c|c|c}
\toprule
Sequence & Score & State & Stack \\
\midrule
\texttt{(} & $-0.523$ & $q_0$ & $Z_0\texttt{(}$ \\
\texttt{x} & $-0.898$ & $q_f$ (accepting) & $Z_0$ \\
\bottomrule
\end{tabular}
\end{center}

\subsubsection{Step 2 ($T=2$)}

The sequence “x” is already in an accepting configuration, and any addition of tokens would violate the constraints. For the sequence \texttt{'('}, the configuration is $(q_0,Z_0\texttt{(})$; the possible tokens are \texttt{')'} (valid transition, $d_H=1$) and \texttt{'('} (valid but $d_H=3$, which exceeds the remaining budget of 2, so it is pruned), while \texttt{'x'} has no valid transition given the current configuration and is therefore pruned. Only one token remains, hence the score does not change:
\[
s'(\texttt{()}) = -0.523 + 0 = -0.523.
\]
The beam at step 2 is:

\begin{center}
\begin{tabular}{c|c|c|c}
\toprule
Sequence & Score & State & Stack \\
\midrule
\texttt{x} & $-0.898$ & $q_f$ & $Z_0$ \\
\texttt{()} & $-0.523$ & $q_0$ & $Z_0$ \\
\bottomrule
\end{tabular}
\end{center}

\subsubsection{Step 3 ($T=1$)}

The sequence \texttt{'x'} remains in an accepting configuration. For \texttt{'()'}, the configuration is $(q_0,Z_0)$; admissible tokens are \texttt{'x'} (valid transition, $d_H=0$) and \texttt{'('} (valid but $d_H=2$, which exceeds the remaining budget of 1, so it is discarded). Again, only one token remains, so
\[
s'(\texttt{()x}) = -0.523.
\]
The final beam contains two accepting sequences:

\begin{center}
\begin{tabular}{c|c|c}
\toprule
Sequence & Score & Accepting? \\
\midrule
\texttt{x} & $-0.898$ & Yes \\
\texttt{()x} & $-0.523$ & Yes \\
\bottomrule
\end{tabular}
\end{center}

\subsection{Final Result}
Among the two sequences, we select the one with the highest score:
\[
\boxed{\texttt{()x}} \quad \text{because } -0.523 > -0.898.
\]
Thus, the output generated by our method for this example is the string \texttt{()x}.
This simple case demonstrates how the beam search, guided by the distance function $d_H$, efficiently explores the space of derivations, discards candidates that cannot lead to a valid string within the token budget, and finally picks the most plausible solution.

\begin{table*}[!t]
\centering

\begin{tabular}{c|c|c|c|c}
\toprule
Current State & Input & Stack Top & Next State & Stack Operation \\
\midrule
$q_0$ & \texttt{(} & $Z_0$ & $q_0$ & push \texttt{(} \\
$q_0$ & \texttt{(} & \texttt{(} & $q_0$ & push \texttt{(} \\
$q_0$ & \texttt{)} & \texttt{(} & $q_0$ & pop \texttt{(}\\
$q_0$ & \texttt{x} & $Z_0$ & $q_f$ & -- \\
\bottomrule
\end{tabular}

\caption{PDA transitions for the running example.}
\label{tab:pda}
\end{table*}

\end{document}